\newcommand{\thickhline}{%
    \noalign {\ifnum 0=`}\fi \hrule height 1pt
    \futurelet \reserved@a \@xhline
}
\newcolumntype{"}{@{\hskip\tabcolsep\vrule width 1pt\hskip\tabcolsep}}
\def \x {\mathbf{x}}
\def \s {\mathbf{s}}
\def \y {\mathbf{y}}
\def \y {\mathbf{y}}
\def \z {\mathbf{z}}
\def \L {\mathcal{L}}
\newtheorem{prop}{Proposition}
\title{Semi-Anchored Detector for One-Stage Object Detection}
\author{
Lei Chen\quad Qi Qian\quad Hao Li\\
}
\begin{document}

\maketitle

\begin{abstract}
A standard one-stage detector is comprised of two tasks: classification and regression. Anchors of different shapes are introduced for each location in the feature map to mitigate the challenge of regression for multi-scale objects. However, the performance of classification can degrade due to the highly class-imbalanced problem in anchors. Recently, many anchor-free algorithms have been proposed to classify locations directly. The anchor-free strategy benefits the classification task but can lead to sup-optimum for the regression task due to the lack of prior bounding boxes. In this work, we propose a semi-anchored framework. Concretely, we identify positive locations in classification, and associate multiple anchors to the positive locations in regression. With ResNet-101 as the backbone, the proposed semi-anchored detector achieves $43.6\%$ mAP on COCO data set, which demonstrates the state-of-art performance among one-stage detectors.
\end{abstract}

\section{Introduction}
With the development of deep learning, object detection becomes more applicable for real-world applications using deep neural networks. Many modern detectors work in either one-stage or two-stage manners. In a two-stage detection pipeline, a region proposal method is adopted to eliminate most of background bounding boxes. After that, the remaining candidates will be refined in the second stage~\cite{HeGDG17,RenHGS15}. Recently, one-stage object detectors have attracted much attention due to its efficiency~\cite{LinGGHD17,LiuAESRFB16,RedmonF17}.  Different from two-stage detectors, one-stage detectors identify objects from all candidates directly without region proposal. The compact architecture makes one-stage detectors appropriate for mobile devices with limited computing resources.

\begin{figure}[!ht]
\centering
\includegraphics[width=3.2in]{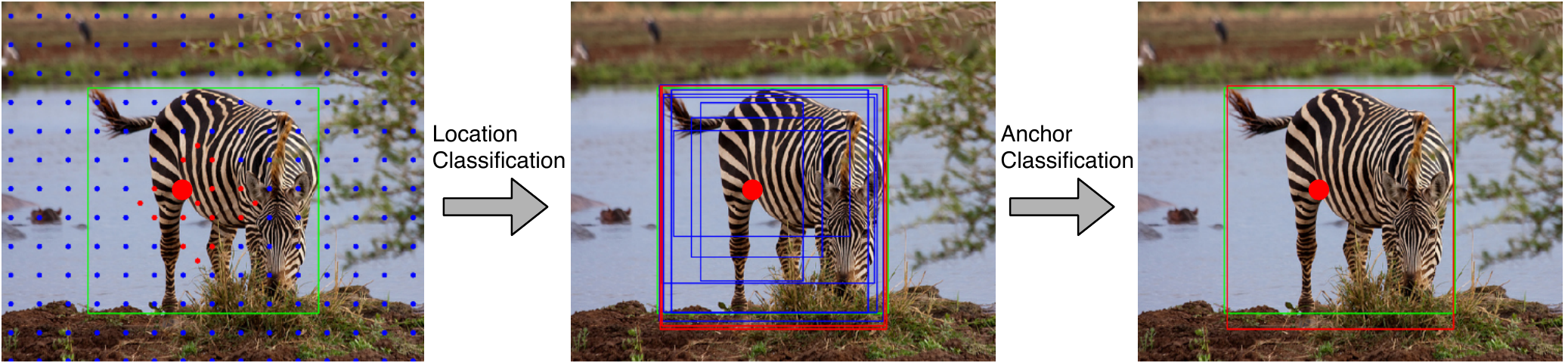}
\caption{An illustration of the proposed semi-anchored detector. It eliminates most of negative locations that contain little foreground anchors with an anchor-free strategy (i.e., location classification). An anchor classifier is further applied to predict labels of anchors at each positive location. (The ground-truth bounding box is highlighted in green. The red and blue bounding boxes are positive and negative ones, respectively)\label{fig:pipeline}}
\end{figure}

Object detection can be comprised of two tasks, that is, classification and regression. Classification is to obtain the candidate locations or bounding boxes for foreground objects while regression is to refine the corresponding bounding boxes. Many existing detectors apply anchors as candidate bounding boxes~\cite{LinGGHD17,RedmonF17,RenHGS15}. Anchor is introduced in a two-stage detector, i.e., faster R-CNN~\cite{RenHGS15}, to handle multiple scales of objects. For each location in a feature map, multiple prior bounding boxes with different scales and aspect ratios are associated with it, which are defined as anchors. This strategy helps to approach ground-truth bounding boxes for different objects by varying the scales of anchors in regression. It achieves tremendous success for object detection but makes the issue of class-imbalance in classification more challenging. 

Class-imbalance problem is prevalent in object detection since the number of background candidates can be significantly larger than that of foreground ones. When adopting anchors, the ratio of imbalance becomes more challenging. It is because that each location has multiple anchors due to different combinations of scales and aspect ratios. Consequently, more background bounding boxes can be produced even for the positive locations. This issue is often handled by a cascade pipeline in two-stage detectors, where the region proposal phase in the first stage only selects a small amount of anchors as candidates for the second stage. In contrast, one-stage detectors have to deal with all anchors simultaneously. Various strategies have been developed to mitigate this challenge for one-stage detectors. For example, SSD~\cite{LiuAESRFB16} samples hard background bounding boxes for training, which is similar to region proposal methods. RetinaNet~\cite{LinGGHD17} proposes focal loss to reduce the influence from the massive number of background anchors. 

Unlike the conventional pipeline with anchors, certain algorithms consider to detect objects without anchors, which can be categorized as anchor-free algorithms~\cite{tian2019fcos}. These algorithms classify locations in the feature map directly and then obtain the bounding boxes centered at the foreground pixels by regression. Without anchors in classification, the performance of identifying positive locations can be improved since the number of candidates from foreground and background is more balanced. However, the regression task becomes challenging since it has to predict the shapes of bounding boxes solely from center pixels rather than a set of pre-defined anchors. The degraded performance on regression can reduce the gain from the classification for object detection.

In this paper, we propose a semi-anchored detector for one-stage object detection. Specifically, we classify locations in the feature map without anchors for the classification task. We can improve the ratio of positive/negative candidates from $1:1400$ with anchors to $1:200$ with locations in classification. For regression, we associate multiple anchors for each location and learn the bounding boxes from anchors for foreground locations. One of the main challenges is to compute the foreground/background probabilities for anchors centered at foreground locations, since anchors were not used in classification. Those anchors can share the probabilities of the corresponding locations, but the performance can be sub-optimal due to the different shapes of anchors. Therefore, we attach an anchor classification head to identify foreground anchors for each location. Fig.~\ref{fig:pipeline} illustrates the procedure of the proposed semi-anchored detector.

Apparently, the proposed detector handles a balanced classification problem without anchors and obtains an enhanced performance of regression with anchors. Besides, we define the positive anchors according to the intersection over union (IoU) after regression. Compared with the conventional algorithms, which label anchors with IoU before regression, the proposed strategy is more consistent with the target. Moreover, the efficiency can be improved due to the simplified classification head. The extensive experiment on COCO data set~\cite{LinMBHPRDZ14} verifies the effectiveness and efficiency of the proposed framework. Our algorithm can surpass FCOS~\cite{tian2019fcos}, which is a state-of-the-art anchor-free detector, and achieve mAP $43.6\%$ with ResNet-101 as the backbone. Furthermore, the inference time of the proposed method is less than RetinaNet even when we assign more anchors for each location.


\section{Related Work}
\label{sec:related}
\paragraph{Two-stage Detectors.} 
Many conventional object detectors have two stages. In the first stage, a small set of candidate bounding boxes that probably contain objects are proposed. Then, those candidates can be refined in the second stage. Explicitly, two-stage detectors work in a cascade manner. Since the first stage filters most of background candidates, the final problem in the second stage is well balanced and can be addressed well. With the development of deep neural networks, two-stage detectors demonstrate a superior performance on benchmark data sets as follows. 

For example, R-CNN~\cite{GirshickDDM14} applies Selective Search~\cite{UijlingsSGS13} to generate candidate proposals and classifies candidates with features from the convolutional neural networks (CNNs) in the second stage. Fast R-CNN~\cite{Girshick15} improves the efficiency of feature extraction from CNNs. Faster R-CNN~\cite{RenHGS15} introduces CNNs for the first stage and proposes the region proposals network (RPN) to obtain the candidates, which further reduces the computational cost. Moreover, many variants of R-CNN have been proposed~\cite{CaiV18,DaiLHS16,abs-1906-02739,HeGDG17,LinDGHHB17,abs-1811-12030,PangCSFOL19}. However, those detectors consists of two stages for inference, which is inefficient for applications with limited resources.

\paragraph{One-stage Detectors.} 
To simplify the architecture of two-stage detectors for real-world applications, researchers try to detect objects with a single stage~\cite{LinGGHD17,LiuAESRFB16,RedmonF17,tian2019fcos,abs-1901-03278,YangZLZS18,abs-1909-02466,abs-1903-00621,QianCLJ20}. Since anchors are prevalent in two-stage detectors, many one-stage detectors also work on anchors~\cite{LinGGHD17,LiuAESRFB16,RedmonF17}. To mitigate the imbalance problem in anchors, SSD~\cite{LiuAESRFB16} applies hard example mining to select anchors for training. Furthermore, RetinaNet~\cite{LinGGHD17} proposes focal loss to reduce the influence from the massive background anchors. Besides, some works consider to optimize the shape of anchors. YOLOv2~\cite{RedmonF17} adopts clustering to make sure that the initial shape of anchors can approximate the ground-truth bounding boxes well. MetaAnchor~\cite{YangZLZS18} and Guided Anchoring~\cite{abs-1901-03278} learn the shape of anchors within the training pipeline. All of these methods keep anchors for classification.

Recently, the anchor-free detector is proposed to eliminate anchors in one-stage detectors~\cite{tian2019fcos,ZhangCYLL20}. Without the additional negative examples introduced by anchors, the classification task can be solved more effectively. However, for regression, the algorithm has to predict the bounding boxes from the corresponding centers. Compared with regression with anchors, the task becomes more challenging and the performance of the detector can be sub-optimal. In this work, we propose a semi-anchored detector to overcome the class-imbalance problem in classification and take the benefit from anchors for regression. Finally, the issue that labels of anchors are computed with IoU before regression has attracted attentions in some works\cite{CaoPHL19,VuJPY19,06563}. We propose a simple strategy to mitigate the inconsistency by generating labels of anchors in positive locations with IoU after regression.

\section{Semi-Anchored Detector}
\label{sec:method}
Since anchors can result in the severe imbalance problem to the classification task, we propose to do anchor-free classification at first. Then, considering the significant performance improvement that anchors giving to regression, we propose to include anchors in regression. However, there will lead to a big gap between the classification and the regression task. 

Concretely, the anchor-free classification (i.e., location classification) task predicts the probability per \textit{location}, while the regression task improves the bounding boxes based on \textit{anchors}. Therefore, there is a lack of the probabilities of classes for anchors. If anchors from the same location share the same probability, it becomes hard to distinguish the best bounding box among them according to the pipeline of non-maximum suppression (NMS). Therefore, we propose a semi-anchored detector that uses a new location labeling strategy as described in Fig.~\ref{fig:mapping} and one more head to identify positive anchors for each location as elaborated in Fig.~\ref{fig:archi} to bridge the gap. Note that we optimize a standard IoU loss~\cite{YuJWCH16} for the regression head as suggested in \cite{tian2019fcos}, and we will focus on elaborating the classification task in this section.

\subsection{Location Classification}\label{subsec:labeling}

In the classification task, we identify foreground locations from the feature map without anchors. A location refers to a pixel in the feature map. Let $\{\x_i,y_i\}$ denote the set of locations, where $\x_i$ is the feature and $y_i$ indicates the label of the $i$-th location. For a problem with $C$ foreground objects, we let $y_i\in\{0,\dots,C\}$ where $y_i=0$ indicates a background location. Note that $\x_i$ can be extracted from the feature map directly and the only problem is to assign appropriate labels for locations.

The most straightforward way to label each location is using ground truth bounding boxes, that is, each location within a ground truth bounding boxes can be labeled by the corresponding foreground label. However, each location can be associated with multiple foreground objects. A heuristic method that labels an overlapped location with the label from a smaller object~\cite{tian2019fcos} may not be consistent with a regression task using anchors. Therefore, we propose to define the label for each location with anchors in the proposed semi-anchored detector.

For each location, we associate $K$ anchors with different scales and aspect ratios as in the anchor-based methods. Following the conventional algorithms, the label of anchors can be obtained by computing IoU with the ground-truth bounding boxes. Let the one-hot vector $\y_{i,k}\in\{0,1\}^{C+1}$ denote the label of the $k$-th anchor in the $i$-th location. With the labels of anchors for a location, we can obtain the confidence score for the location as
\[\s_i = \sum_k \y_{i,k}/K \]
Considering the significant large number of backgrounds in anchors, we re-scale the score with a constant $0\le\gamma\le 1$ to the backgrounds (i.e., $c=0$ where $\s_i^c$ is the $c$-th element in $\s_i$) as
\begin{eqnarray}\label{eq:mv}
\hat{\s}_i^c =\left\{\begin{array}{cc}\gamma \s_i^c&c=0\\(1-\gamma)\s_i^c&o.w.\end{array}\right.
\end{eqnarray}
which is equivalent to threshold moving to address a class-imbalance problem.

\begin{figure}[t]
\centering
\includegraphics[width=3.2in]{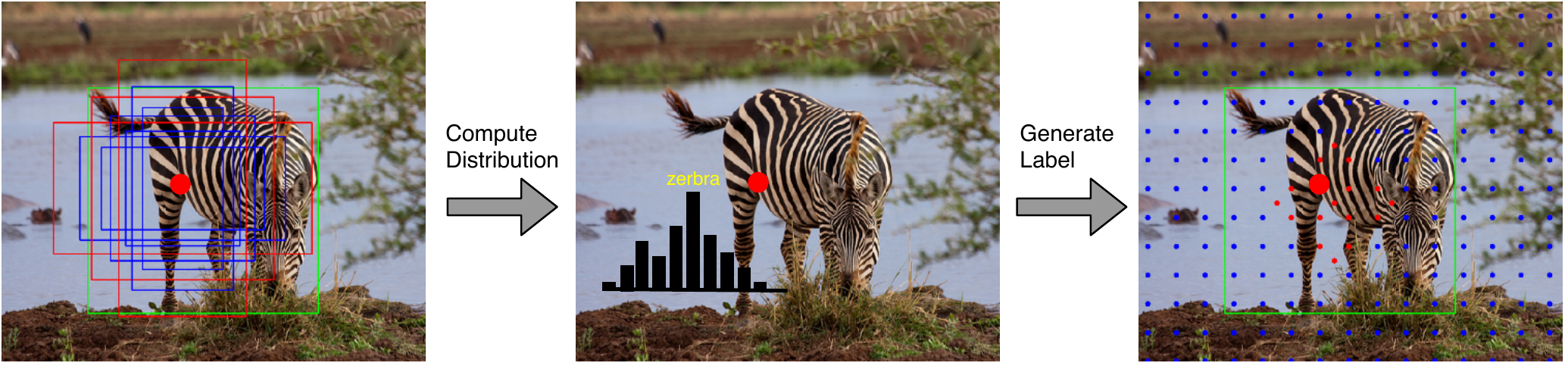}
\caption{An illustration of labeling strategy for locations. It first collects labels from associated anchors. Then, the label distribution for each location is computed accordingly. Finally, the most confident label will be assigned to the location. (The ground-truth bounding box, positive and negative bounding box are highlighted in green, red and blue, respectively. The dots indicate locations in the feature map. The red ones are labeled as foreground while the blue ones are for background.)\label{fig:mapping}}
\end{figure}

Given the confidence score, the label of the $i$-th location can be defined as
\[y_i = \arg\max_c \{\hat{\s}_i^c\}\]
which highly depends on $\gamma$. When $\gamma$ is sufficiently small, a location with any positive anchor can be labeled as positive, which is proved in the following proposition.

\begin{prop}
According to Eqn.~\ref{eq:mv}, when $\gamma < 1/K$, the label of the $i$-th location $y_i$ will be positive if any positive anchor is associated with the $i$-th location.
\end{prop}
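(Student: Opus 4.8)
The plan is to compare the two quantities $\hat{\s}_i^0$ and $\max_{c\neq 0}\hat{\s}_i^c$ directly and show that the latter strictly dominates whenever $\gamma<1/K$ and at least one positive anchor is present at location $i$, so that the $\arg\max$ defining $y_i$ cannot be the background class $c=0$.

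First I would record two elementary bounds on the raw score vector $\s_i=\sum_k\y_{i,k}/K$. Since each $\y_{i,k}$ is one-hot, $\s_i^c$ is exactly the fraction of the $K$ anchors at location $i$ whose label is $c$. Hence (i) if some anchor is positive, say with class $c^\ast\neq 0$, that single anchor already contributes $1/K$ to coordinate $c^\ast$, so $\s_i^{c^\ast}\ge 1/K$; and (ii) because at least one anchor is now accounted for as a positive, at most $K-1$ anchors can be background, so $\s_i^0\le (K-1)/K$. Bound (ii) is the crucial one: using only the trivial estimate $\s_i^0\le 1$ would yield the weaker threshold $\gamma<1/(K+1)$, so recognizing that a positive anchor ``uses up'' one of the $K$ slots is what pins the constant down to $1/K$.

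Next I would substitute these into the rescaling in Eqn.~\ref{eq:mv}: $\hat{\s}_i^{c^\ast}=(1-\gamma)\s_i^{c^\ast}\ge (1-\gamma)/K$, while $\hat{\s}_i^0=\gamma\,\s_i^0\le \gamma(K-1)/K$. It then remains to check the inequality $(1-\gamma)/K>\gamma(K-1)/K$, which after clearing $K$ and rearranging is equivalent to $1>\gamma K$, i.e. $\gamma<1/K$, exactly the hypothesis. Therefore $\hat{\s}_i^{c^\ast}>\hat{\s}_i^0$, so $c=0$ is not a maximizer of $\hat{\s}_i^{\,\cdot}$, and $y_i=\arg\max_c\hat{\s}_i^c$ must be a positive class, which is the claim. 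The argument is short; the only real ``obstacle'' is the bookkeeping observation (ii), and one should also note that the boundary situations (for instance $\gamma=0$, or all $K$ anchors positive so that $\s_i^0=0$) are covered by the same chain of inequalities since $\gamma\ge 0$.
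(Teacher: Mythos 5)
Your proof is correct, and it rests on the same core inequality as the paper's: a single positive anchor forces $\s_i^{c^\ast}\ge 1/K$ while leaving $\s_i^0\le (K-1)/K$, and the rescaling with $\gamma<1/K$ then makes the foreground coordinate strictly dominate the background one. The difference is purely in packaging. The paper first normalizes $\hat{\s}_i$ into a label distribution $\tilde{\s}_i^c=\hat{\s}_i^c/\sum_c\hat{\s}_i^c$ and bounds the foreground and background entries against the common benchmark $1/(1+\sum_{j=1}^C n_j)$, which costs a few extra lines but yields as a by-product an explicit lower bound $\max_c n_c/(1+\sum_j n_j)$ on the confidence of the assigned label. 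You skip the normalization entirely (legitimately, since it is a positive rescaling that cannot change the $\arg\max$) and compare $\hat{\s}_i^{c^\ast}$ with $\hat{\s}_i^0$ directly, which gives a shorter derivation of exactly the stated claim but no quantitative confidence statement. Your remark that the crude bound $\s_i^0\le 1$ would only give the threshold $\gamma<1/(K+1)$, and that the sharper bound $\s_i^0\le (K-1)/K$ is what pins the constant to $1/K$, is a correct and worthwhile observation that the paper's proof makes only implicitly through the term $(K-\sum_j n_j)/(K-1)$.
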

\begin{proof}
First, we normalize the score $\hat{\s}_i$ to indicate the label distribution for each location as $\tilde{\s}_i^c = \hat{\s}_i^c/\sum_c\hat{\s}_i^c$, which can be used to demonstrate the confidence of a selected label. 
Then, assuming the $j$-th foreground label has $n_j$ anchors associated with the $i$-th location and $\sum_{j=1}^{C}n_j\ge 1$, we have
\begin{align*}
&\Pr\{y_i = c\} =\tilde{\s}_i^c> \frac{n_c}{\sum_{j=1}^{C}n_j +\frac{K-\sum_{j=1}^C n_j}{K-1}}\\
&\ge \frac{n_c}{1+\sum_{j=1}^{C}n_j}
\end{align*}

where the first inequality is from the assumption that $\gamma < 1/K$ and the second inequality is due to $\sum_{j=1}^{C}n_j\ge 1$.

With the similar analysis, we have the background probability as
\begin{align*}
&\Pr\{y_i =0 \}=\tilde{\s}_i^0<\frac{n_0}{n_0+(K-1)\sum_{j=1}^C n_j}\\
&=\frac{1}{1+\sum_{j=1}^{C}n_j\frac{K-1}{K-\sum_{j=1}^{C}n_j}}\leq \frac{1}{1+\sum_{j=1}^{C}n_j}
\end{align*}
Therefore, if there exists any foreground anchor that $\exists j\in\{1,\dots,C\}, n_j\geq 1$ associating with the $i$-th location, the location will be labeled as positive with a confidence larger than $\max_c \{\frac{n_c}{1+\sum_{j=1}^{C}n_j}\}$.
\end{proof}

It should be noted that anchors are only used to obtain labels for locations in the classification task. The proposed labeling strategy is illustrated in Fig.~\ref{fig:mapping}. It can be observed that many non-essential locations within the ground-truth bounding boxes are labeled as background with the proposed pipeline. To further demonstrate our labeling strategy, we show some examples of positive locations in Fig.~\ref{fig:label}. We can observe that the locations with positive labels are concentrated at the centers of the ground-truth bounding boxes.

\begin{figure}[t]
\centering
\includegraphics[width=3.2in]{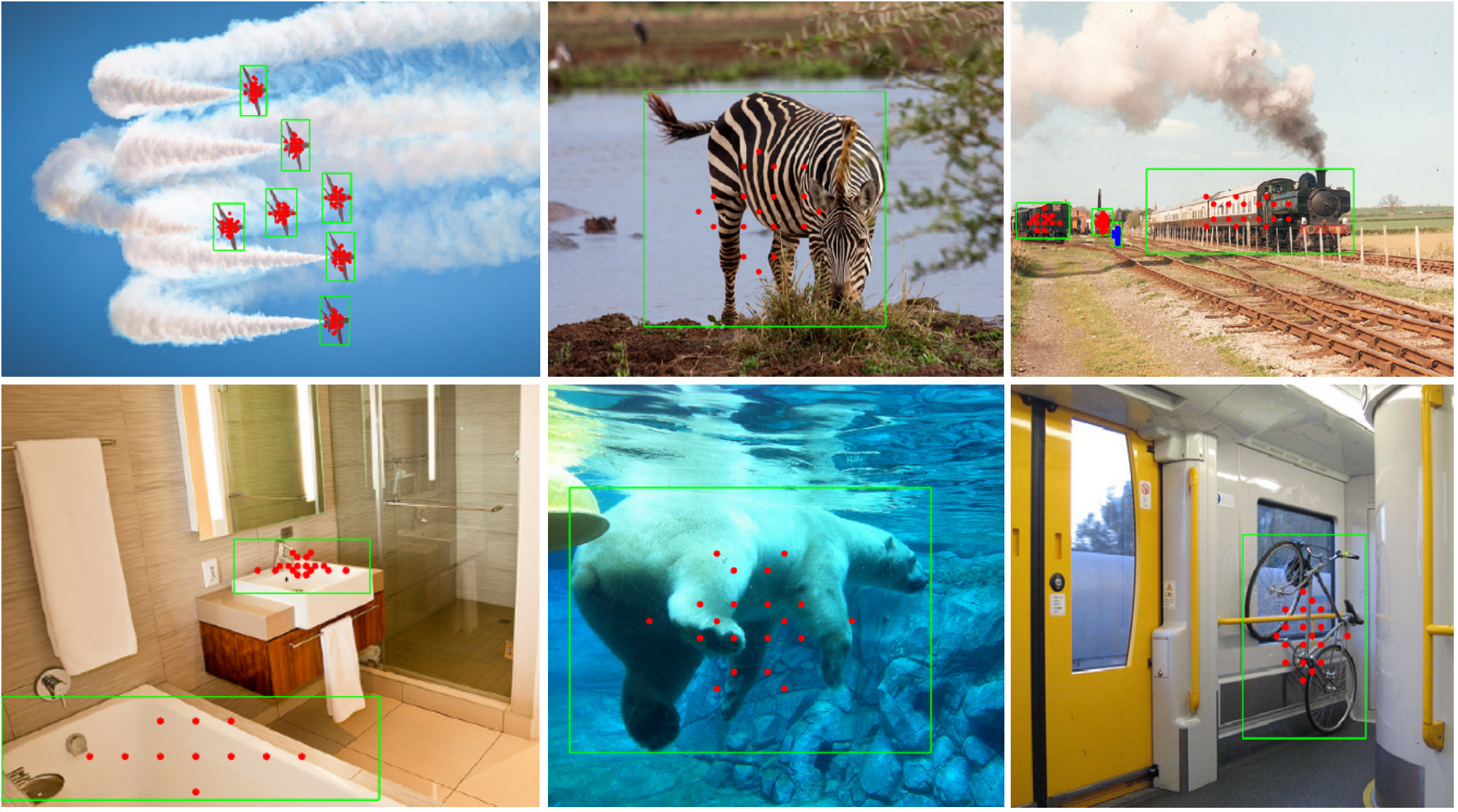}
\caption{Illustration of positive locations from different images in COCO data sets. The green bounding box denotes the ground-truth bounding box. The red dots illustrate the positive locations. Note that the locations are asymmetric due to the different strides of features.\label{fig:label}}
\end{figure}

With the labeled locations, we can train the classifier with the popular focal loss as in other works~\cite{LinGGHD17,tian2019fcos}
\[\L_{\mathrm{cls}} = \frac{1}{\sum_i \mathbb{I}_{y_i>0}} {\sum_i \mathrm{FL}(p_i, y_i)}\]
where $\mathbb{I}(\cdot)$ is the indicator function and $p_i = \Pr\{y_i | \x_i\}$ is the prediction. The classification loss is accumulated over all locations. Apparently, we can get rid of the serious imbalance problem in the classification task. Then, we can do regression. However, when using anchors after regression, we still lack the probability information of useful anchors. Therefore, we propose to include one more head to identify positive anchors for each location as described in the next subsection. 

\subsection{Anchor Classification}\label{subsec:head}

Now given the location classifier and anchor regressor, during the inference, the location classifier may tell that the probability $\Pr\{y_i=c|\x_i\}$ of the $i$-th location for object $c$ and the regressor provides $K$ bounding boxes from anchors $\{\z_{i,k}\}_{k=1,\dots,K}$, where $\z_{i,k}$ denotes features for $k$-th anchor at the $i$-th location. Here comes the main gap, that is, which of the $K$ anchors should be the output. Therefore, we aim to estimate the probability for each anchor as $\Pr\{y_{i,k}=c|\x_i,\z_{i,k}\}$ while only the probabilities for the corresponding locations $\Pr\{y_i=c|\x_i\}$ is available.

Considering that the label of anchors should be consistent with its location, we compute the conditional probability as
\begin{align*}
&\Pr\{y_{i,k}=c|\x_i,\z_{i,k}\} \\
&= \Pr\{y_i=c|\x_i\} \Pr\{y_{i,k}=y_i|\z_{i,k},y_i=c\}
\end{align*}

The formulation implies a binary classification problem that identifies the anchors with the same label as the locations. Therefore, we can collect the training set as $\{\z_{i,k},\hat{y}_{i,k}\}$, where 
\[\hat{y}_{i,k} = \left\{\begin{array}{ll}1&\quad y_{i,k}=y_i\\0 &\quad y_{i,k}\not=y_i\end{array}\right.\]

In the conventional anchor-based methods, the labels of anchors $\{y_{i,k}\}$ are computed according to the prior shapes of anchors. After regression, the refined shapes can be different from the initial ones, which actually leads to a big disparity.
Our proposal can remove this disparity by computing the IoU for the improved anchors after regression. With the appropriate labels, we can learn the anchor classifier by optimizing a focal loss. 

It should be noted that the labels of anchor classification are binary while the IoU is a continuous number, which means the optimal probability for different anchors can vary. Inspired by the knowledge distillation~\cite{HintonVD15}, we consider to incorporate soft labels to train the anchor classifier.

Let $\mu_{i,k}$ denote the IoU of anchor $\z_{i,k}$. First, we normalize the IoU scores for each location as
\begin{eqnarray}\label{eq:uk}
\hat{\mu}_{i,k} = (\mu_{i,k}/\max_k\{\mu_{i,k}\})^\sigma
\end{eqnarray}
where the optimal anchor at each location will have the score $1$ and the rest will reduce their scores based on the parameter $0<\sigma<1$. Then, we adopt the score as the soft label and introduce a smoothed focal loss
\begin{eqnarray}\label{eq:sfl}
&&\L_{s}(p_{i,k},\hat{\mu}_{i,k},\hat{y}_{i,k}) \\
&&= \left\{\begin{array}{cc} -\alpha (|\hat{\mu}_{i,k} - p_{i,k}|)^\beta \hat{\mu}_{i,k} \log(p_{i,k}) &\quad \hat{y}_{i,k} = 1\\ -(1 - \alpha) p_{i,k}^\beta \log(1-p_{i,k})&\quad o.w.\end{array}\right.\nonumber
\end{eqnarray}
where $p_{i,k} = \Pr\{\hat{y}_{i,k}=1|\z_{i,k}\}$ is the prediction of the anchor classifier. Compared with the standard focal loss, we have $\hat{\mu}_{i,k}$ as a smoothed label for the positive anchor instead of $1$, which can capture the distribution of different anchors better and improve the performance slightly as illustrated in Table~\ref{table:sfloss}.

The suggested configuration for $(\alpha,\beta)$ in \cite{LinGGHD17}, i.e., $(\alpha,\beta) = (0.25,2)$, is adopted for anchor classification while $(\alpha,\beta) = (0.25,1)$ is applied for the standard focal loss in location classification. With the proposed smoothed focal loss, the anchor classifier is learned by minimizing the loss over all foreground locations as
\[\L_{\mathrm{ac}} = \frac{1}{N_{\mathrm{anchor}}^+} \sum_{i} \mathbb{I}_{y_i>0}  \sum_{k} \L_{s}(p_{i,k},\hat{\mu}_{i,k},\hat{y}_{i,k})\]
where $N_{\mathrm{anchor}}^+$ indicates the total number of positive anchors.

In summary, the objective of the semi-anchored detector is to minimize
\[\L = \L_{\mathrm{cls}}+\lambda_{\mathrm{reg}}\L_{\mathrm{reg}}+\lambda_{\mathrm{ac}}\L_{\mathrm{ac}}\]
where we fix $\lambda_{reg} = 2$ and $\lambda_{ac}=1$ in this work. The architecture of the proposed semi-anchored detector is illustrated in Fig.~\ref{fig:archi}. We adopt the backbone of RetinaNet~\cite{LinGGHD17} and change only the head branches for classification and regression. Note that the anchor classification head shares features with the regression head due to the high correlation between them. Besides, the computational efficiency can be slightly improved without an additional branch for anchor classification.

\begin{figure}[t]
\centering
\includegraphics[width=3.2in]{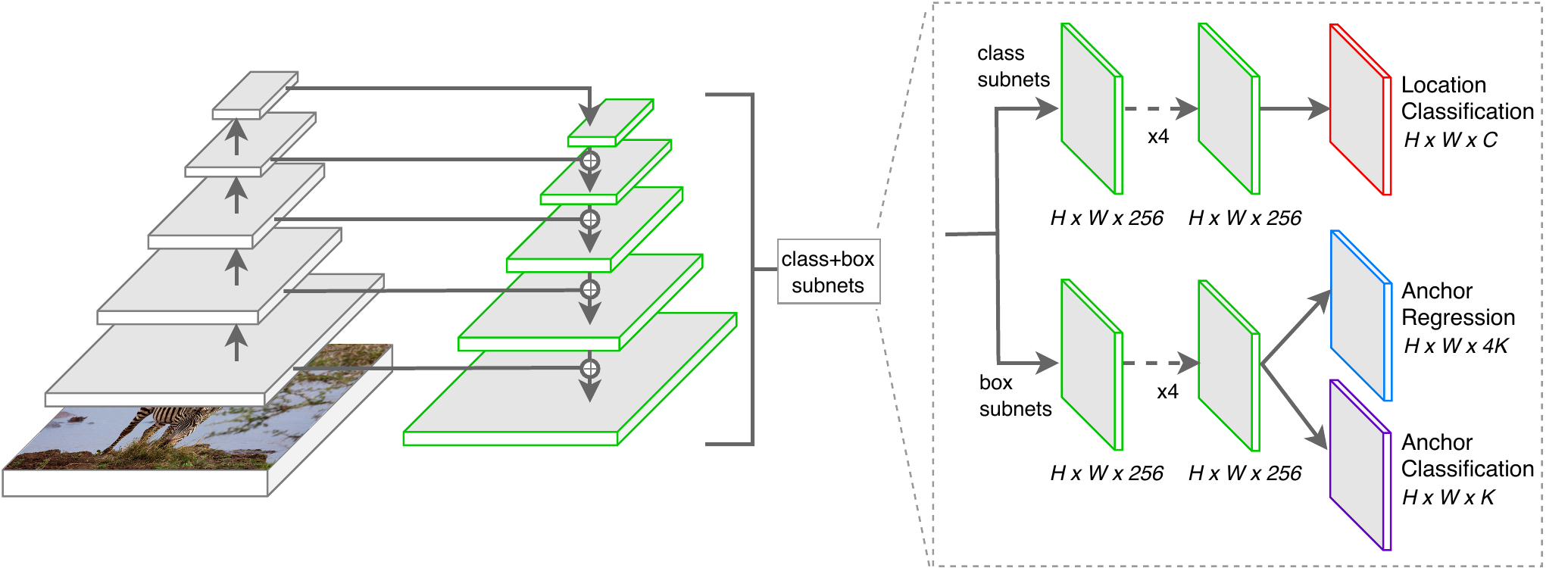}
\caption{An illustration of the architecture of the proposed framework. The backbone of RetinaNet is applied. We add an anchor classification head to identify foreground anchors at positive locations.\label{fig:archi}}
\end{figure}

\section{Experiments}
\label{sec:exp}

To evaluate the proposed method, we compare the semi-anchored detector to benchmark algorithms on MS-COCO 2017~\cite{LinMBHPRDZ14}. COCO training set has 118k images for training, 5k images for validation. The test set consists of 40k images. For ablation study, all models are trained for 90k iterations with an initial learning rate of 0.01 and a mini-batch of 16 images on 8 GPUs. The learning rate is decayed by a factor of 10 at 60k and 80k iterations, respectively. Horizontal image flipping is the only applied data augmentation unless otherwise specified. Weight decay and momentum in SGD optimizer are set to 0.0001 and 0.9, respectively. The parameters in backbone network are initialized from that trained on ImageNet~\cite{ILSVRC15}. We report the mAP on validation set for ablation study and that on \textit{test-dev} set for performance comparison. For a fair comparison, our method is implemented within a public codebase\footnote{https://github.com/facebookresearch/maskrcnn-benchmark}, which is adopted by many existing methods.

\subsection{Ablation Study}
\label{subsection:ablation}
In this subsection, we conduct experiments to study the behavior of the proposed detector. Following the common practice, we adopt ResNet-50~\cite{HeZRS16} with a Feature Pyramid Network (FPN)~\cite{LinDGHHB17} as the backbone for ablation study. Furthermore, we adopt the configuration in FCOS~\cite{tian2019fcos}, where Group Normalization(GN)~\cite{WuH18} is applied in the newly added convolutional layers except for the last prediction layers and P6 and P7 are produced by P5 rather than C5. Those improvements can increase mAP by $0.8\%$ as illustrated in \cite{LinDGHHB17}. All other settings remain unchanged as in the standard RetinaNet. A single image scale of 800 is used for training and test in this subsection.

\subsubsection{Number of Anchors}
First, We evaluate the influence from the number of anchors. We vary the number of scales from $1$ to $5$ and that of aspect ratios in $\{1,3,5\}$. Table~\ref{ta:num} summarizes the results with different number of anchors. 

\begin{table}[!ht]
\centering
\small
\caption{Comparison of different number of anchors. The last column indicates the inference time ($ms$). \#s and \#a denote the number of scales and aspects, respectively.}
\label{ta:num}
\begin{tabular}{ll"p{0.5cm}p{0.5cm}p{0.5cm}p{0.5cm}p{0.5cm}p{0.5cm}p{0.5cm}}
\#s &\#a &AP&AP$_{50}$&AP$_{75}$&AP$_{S}$&AP$_{M}$&AP$_{L}$&Time\\\thickhline
\multicolumn{2}{l"}{RetinaNet}&35.9&56.0&38.2&20.0&39.8&47.4\\
1&1&35.8&55.0&38.1&21.2&38.3&48.2&95\\
2&1&37.4&55.8&40.4&21.0&41.2&49.8&96\\
3&3&39.0&57.2&42.4&\textbf{22.7}&43.1&52.3&97\\
4&3&38.8&56.8&42.0&22.3&42.8&51.5&98\\
4&5&39.4&57.4&42.3&22.2&43.4&52.5&97\\
\textbf{5}&\textbf{5}&\textbf{39.6}&\textbf{57.4}&\textbf{43.1}&22.3&\textbf{43.8}&\textbf{53.0}&98
\end{tabular}
\label{table:density}
\end{table}

We can observe that our method with a single anchor already achieves the similar performance as RetinaNet, where we adopt the performance of RetinaNet reported in \cite{tian2019fcos}. It demonstrates that reducing the number of negative examples in the classification task by classifying locations directly can significantly boost the performance of one-stage detectors. It is also consistent with the observation in RetinaNet that class-imbalance issue degrades the performance of one-stage detection algorithms. 

Second, the performance of semi-anchored algorithm improves with the increasing number of anchors. It shows that a single anchor cannot handle the objects with multiple scales well and multiple anchors can depict the bounding boxes better. 

To further verify the effectiveness of the anchor classification head, we compare the proposed method to the variant without an anchor classifier. Since there is no predicted probability for a single anchor given its location, we randomly pick an anchor from each positive location and then assign the probability of the corresponding location to the anchor. For the baseline without AC head, we repeat the experiments for $10$ times and report the best result. The comparison is shown in Table~\ref{ta:ac}. Evidently, the anchor classification head, as an essential component in the proposed semi-anchored detector, can dramatically boost the performance by identifying positive anchors from negative ones. Even when there are only two anchors at each location, our method can outperform the one with random sampling by a large margin of about $6.8\%$. It confirms our claim that an anchor classification head helps to refine the anchors at each location.

\begin{table}[!ht]
\centering
\caption{Comparison of the proposed algorithm with or without anchor classification (AC) head.}
\label{ta:ac}\begin{tabular}{lll"lll}
\#s&\#a&w/AC&AP&AP$_{50}$&AP$_{75}$\\\thickhline
2&1&&30.6&49.5&31.8\\
2&1&\checkmark&\textbf{37.4}&\textbf{55.8}&\textbf{40.4}
\end{tabular}
\label{table:density}
\end{table}

Finally, we note that the efficiency affected by multiple anchors is tiny. It is because we only optimize anchors in the regression and anchor classification heads. By eliminating most of background locations with location classification, the number of remained locations are limited. Note that even with $25$ anchors, the proposed algorithm is faster than RetinaNet with $9$ anchors, which consumes more than $98ms$ per image for inference. Our inference time is also comparable to FCOS~\cite{tian2019fcos}, which is an anchor free method and costs $98ms$ for inferring each image. We will adopt $25$ anchors in the following experiments.

\subsubsection{Location Classification}
Different strategies can be used to generate labels for locations. We compare three different methods and the comparison is shown in Table~\ref{ta:labeling}. We first try the labeling strategy in FCOS~\cite{tian2019fcos}, which is denoted as ``FCOS''. The drawback of this strategy is that it assigns all inside locations to be positive, which can include the background locations. Therefore, we can shrink the ground-truth box to reduce the valid projection area and the improved strategy is referred as ``FCOS-Shrink''. The mAP increases $0.2\%$ over the original one in FCOS. More importantly, the proposed labeling method for the semi-anchored detector can further improve mAP from $38.0\%$ to $39.6\%$. It demonstrates the effectiveness of the proposed location labeling strategy using anchors.

\begin{table}[!ht]
\centering
\small
\caption{Comparison of labeling strategies for locations.}
\label{ta:labeling}
\begin{tabular}{l"p{0.5cm}p{0.5cm}p{0.5cm}p{0.5cm}p{0.5cm}p{0.5cm}}
labeling strategy&AP&AP$_{50}$&AP$_{75}$&AP$_{S}$&AP$_{M}$&AP$_{L}$\\\thickhline 
FCOS&37.8&55.0&40.7&21.7&42.0&49.9\\
FCOS-Shrink&38.0&55.8&41.0&21.9&41.9&49.9\\
Ours&\textbf{39.6}&\textbf{57.4}&\textbf{43.1}&\textbf{22.3}&\textbf{43.8}&\textbf{53.0}
\end{tabular}
\end{table}

Then, we study the threshold moving strategy in Eqn.~\ref{eq:mv}. Table~\ref{ta:gamma} shows the results when increasing $\gamma$ in Eqn.~\ref{eq:mv} from $1/26$ to $0.2$. When $\gamma = 1/26$, it is less than $1/25$, which means that a location with any foreground objects will be labeled as foreground. By increasing $\gamma$, a location should contains more foreground anchors to have a positive label. We observe that with a small $\gamma$, the performance of semi-anchored detector is significantly better than that with a large $\gamma$. It demonstrates that all locations with foreground anchors should be kept for the regression and anchor classification. This observation also helps to simplify our labeling strategy for locations. Given the confidence score of the $i$-th location $s_i$, the label can be obtained as \begin{alignat}{1}y_i = \left\{\begin{array}{cc} 0&\s_i^0=1\\\arg\max_{c\in\{1,\cdots,C\}}\{\s_i^c\}&o.w.\end{array}\right.\end{alignat} which gets rid of the parameter $\gamma$. This strategy will be adopted in the rest of experiments.

\begin{table}[!ht]
\centering
\caption{Comparison of varying $\gamma$ for labeling locations.}
\label{ta:gamma}
\begin{tabular}{l"llllll}$\gamma$&AP&AP$_{50}$&AP$_{75}$&AP$_{S}$&AP$_{M}$&AP$_{L}$\\\thickhline
1/26&\textbf{39.6}&\textbf{57.4}&\textbf{43.1}&\textbf{22.3}&\textbf{43.8}&\textbf{53.0}\\
0.05&39.1&56.9&42.6&22.1&43.6&52.9\\0.1&38.8&56.6&42.2&22.3&43.0&52.3\\
0.2&35.6&52.6&38.3&18.6&40.2&47.9
\end{tabular}
\label{table:labels}
\end{table}

\subsubsection{Anchor Classification}
Here, we demonstrate the effectiveness of the proposed smoothed focal loss in Eqn.~\ref{eq:sfl} for anchor classification. The comparison is summarized in Table~\ref{ta:sfl}. Compared with the standard focal loss, the only additional parameter for the smoothed focal loss is $\sigma$ for normalizing the labels of anchors. By varying the $\sigma$ in Eqn.~\ref{eq:uk}, the smoothed focal loss can improve the performance from $39.2\%$ to $39.6\%$. Considering that most of detectors adopt one-hot labels for optimization, this experiment provides the evidence that the soft label can be more appropriate. It is also consistent with the process of label generation, where the value of IoU is not binary. 
\begin{table}[!ht]
\centering
\caption{Comparison of focal loss (FL) and the smoothed focal loss (SFL) as in Eqn.~\ref{eq:sfl}.}
\label{ta:sfl}
\begin{tabular}{ll"p{0.5cm}p{0.5cm}p{0.5cm}p{0.5cm}p{0.5cm}p{0.5cm}}
loss&$\sigma$&AP&AP$_{50}$&AP$_{75}$&AP$_{S}$&AP$_{M}$&AP$_{L}$\\\thickhline 
FL&&39.2&\textbf{57.5}&42.5&\textbf{22.7}&43.4&51.7\\
SFL &0.1&39.2&57.5&42.4&22.6&43.3&52.8\\
SFL &0.3&39.3&57.5&42.5&22.3&43.3&52.9\\
SFL&0.5&39.5&57.5&42.7&22.7&43.7&53.0\\
SFL&0.7&39.5&57.5&42.8&22.4&43.7&\textbf{53.3}\\
SFL&0.9&\textbf{39.6}&57.4&\textbf{43.1}&22.3&\textbf{43.8}&53.0
\end{tabular}
\label{table:sfloss}
\end{table}
  
\subsubsection{Inference Strategy} Finally, we evaluate different strategies in dealing with the outputs from anchor classification. Even with the anchor classifier, we may have multiple appropriate anchors at each location. The duplicated anchors can be eliminated by NMS operator as in the conventional pipeline. Considering that the anchor classifier is learned with the label from IoU after regression, we can keep a single anchor with the largest confidence for each location. It not only reduces the input size for NMS but also explores the supervised information from anchor regression more sufficiently. Table~\ref{ta:ct} compares two strategies. ``Pos'' denotes the conventional strategy that keeps all positive anchors for NMS, where $\tau$ is a threshold for the predicted probability. ``Top-$k$'' only adopts the top $k$ anchors with largest confidences for each location. Surprisingly, we find that Top-$1$ can be better than Pos by $0.1\%$ on AP and $0.4\%$ on AP$_{50}$. It is because keeping top $1$ anchor can reduce the noise in the input anchors for the NMS operator. This strategy is applied for the comparison to the stage-of-the-art detectors. 

\begin{table}[!ht]
\centering
\caption{Comparison of strategies for inference.}\label{ta:ct}
\begin{tabular}{ll"p{0.5cm}p{0.5cm}p{0.5cm}p{0.5cm}p{0.5cm}p{0.5cm}}
strategy&$\tau$&AP&AP$_{50}$&AP$_{75}$&AP$_{S}$&AP$_{M}$&AP$_{L}$\\\thickhline
Top-$1$&&\textbf{39.6}&\textbf{57.4}&43.1&22.3&43.8&\textbf{53.0}\\
Top-$2$&&39.5&57.2&43.2&22.3&43.8&53.0\\
Top-$5$&&39.5&57.1&\textbf{43.3}&\textbf{22.4}&\textbf{43.9}&53.0\\
Pos&0.1&39.5&56.8&43.3&22.3&43.9&53.0\\
Pos&0.2&39.5&57.0&43.3&22.4&43.9&53.0\\
Pos&0.5&38.8&55.9&42.7&21.2&43.4&52.7
\end{tabular}
\label{table:strategies}
\end{table}

\begin{table*}[!ht]
\centering
\caption{Comparison with state-of-the-art methods on COCO \textit{test-dev} set.}\label{ta:sota}
\begin{tabular}{l|l|lll|lll}
Methods&Backbone&AP&AP$_{50}$&AP$_{75}$&AP$_{S}$&AP$_{M}$&AP$_{L}$\\\thickhline
\textit{two-stage detectors}&&&&&&&\\
Faster R-CNN+++~\cite{HeZRS16}&ResNet-101-C4&34.9&55.7&37.4&15.6&38.7&50.9\\
Faster R-CNN w FPN~\cite{LinDGHHB17}&ResNet-101-FPN&36.2&59.1&39.0&18.2&39.0&48.2\\
Deformable R-FCN~\cite{DaiQXLZHW17}&Aligned-Inception-ResNet&37.5&58.0&40.8&19.4&40.1&52.5\\
Mask R-CNN~\cite{HeGDG17}&Resnet-101-FPN&38.2&60.3&41.7&20.1&41.1&50.2\\
Cascade R-CNN~\cite{CaiV18}&Resnet-101-FPN&42.8&62.1&46.3&23.7&45.5&55.2\\
\hline
\textit{one-stage detectors}&&&&&&&\\
YOLOv2~\cite{RedmonF17}&DarkNet-19&21.6&44.0&19.2&5.0&22.4&35.5\\
SSD513~\cite{LiuAESRFB16}&ResNet-101-SSD&31.2&50.4&33.3&10.2&34.5&49.8\\
DSSD513~\cite{FuLRTB17}&ResNet-101-DSSD&33.2&53.3&35.2&13.0&35.4&51.1\\
GA-RetinaNet~\cite{abs-1901-03278}&ResNet-50-FPN&37.1&56.9&40.0&20.1&40.1&48.0\\
RetinaNet~\cite{LinGGHD17}&ResNet-101-FPN&39.1&59.1&42.3&21.8&42.7&50.2\\
RetinaNet~\cite{LinGGHD17}&ResNeXt-32x8d-101-FPN&40.8&61.1&44.1&24.1&44.2&51.2\\
CornerNet~\cite{LawD18}&Hourglass-104&40.5&56.5&43.1&19.4&42.7&53.9 \\
CenterNet~\cite{abs-1904-08189}&Hourglass-104&44.9&62.4&48.1&25.6&47.4&\textbf{57.4} \\
FSAF~\cite{abs-1903-00621}&ResNet-101-FPN&40.9&61.5&44.0&24.0&44.2&51.3 \\
FSAF~\cite{abs-1903-00621}&ResNeXt-64x4d-101-FPN&42.9&63.8&46.3&27.0&47.9&52.7 \\
FCOS~\cite{tian2019fcos}&ResNet-101-FPN&41.0&60.7&44.1&24.0&44.1&51.0\\
FCOS~\cite{tian2019fcos}&ResNeXt-64x4d-101-FPN&43.2&62.8&46.6&26.5&46.2&53.3\\
FreeAnchor~\cite{abs-1909-02466}&ResNeXt-64x4d-101-FPN&44.8&64.3&48.4&27.6&47.5&56.0\\
\hline
Ours&ResNet-101-FPN&43.6&62.1&47.5&25.7&47.1&55.3\\
Ours&ResNeXt-32x8d-101-FPN&45.0&63.9&49.0&27.7&48.7&55.8\\
Ours&ResNeXt-64x4d-101-FPN&\textbf{45.4}&\textbf{64.3}&\textbf{49.4}&\textbf{27.8}&\textbf{49.0}&56.7
\end{tabular}
\label{table:result}
\end{table*}

\subsection{Comparison with State-of-the-Art}
In this subsection, we compare the proposed semi-anchored detector to the state-of-art detectors. All results in this subsection are evaluated on \textit{test-dev} set, where the public label is unavailable. Besides random horizontal flipping, we include scale jitter over scales  $\{640, 672, 704, 736, 768, 800\}$ as the additional augmentation to train a semi-anchored detector sufficiently. The number of epochs is increased to be 2$\times$ longer than that in Section~\ref{subsection:ablation}. Other settings remain the same. Table~\ref{ta:sota} summarizes the results of different methods. First, we observe that the performance of our method is significantly better than one-stage detectors with all anchors for classifications. Compared to RetinaNet, mAP is increased from $39.1\%$ to $43.6\%$, which gains more than $4\%$. It is because the classification problem in location classification is much balanced than that in the original anchor classification. Second, the performance of the semi-anchored detector also surpasses anchor-free algorithms, e.g., FCOS and FSAF, by a large margin. It implies that the strategy of assigning multiple anchors for each location is important for accurate regression. The promising performance on COCO data set illustrates that the proposed semi-anchored detector can benefit from both the anchor-free classification and the anchor-based regression. Moreover, our method outperforms two-stage detectors and is comparable to the multi-stage detector: Cascade R-CNN. It confirms the effectiveness of the proposed algorithm. Finally, we note that with a better backbone as ResNeXt64x4d-101-FPN~\cite{XieGDTH17}, the performance can be further improved and achieve $45.4\%$ mAP, which demonstrates a state-of-the-art performance for one-stage detectors.

\begin{table}[!ht]
\centering
\caption{Comparison of our semi-anchored (SA) strategy on SSD}\label{ta:ssd}
\begin{tabular}{ll"p{0.5cm}p{0.5cm}p{0.5cm}p{0.5cm}p{0.5cm}p{0.5cm}}
Methods&w/SA&AP&AP$_{50}$&AP$_{75}$&AP$_{S}$&AP$_{M}$&AP$_{L}$\\\thickhline
SSD300&&25.5&45.0&25.9&8.4&26.9&41.4\\
SSD300&\checkmark&27.5&43.1&29.0&11.2&31.2&43.4\\
SSD512&&30.1&51.3&31.3&12.9&34.2&43.6\\
SSD512&\checkmark&\textbf{32.0}&\textbf{48.8}&\textbf{34.3}&\textbf{15.9}&\textbf{37.9}&\textbf{46.5}
\end{tabular}
\end{table}

\subsection{Semi-Anchored SSD}
Besides RetinaNet, the proposed algorithm is easy to incorporate with existing detectors. To illustrate that, we embed semi-anchored strategy into SSD~\cite{LiuAESRFB16}. We adopt VGG16~\cite{SimonyanZ14a} as the backbone and train the model with 120 epochs. The results are shown in Table~\ref{ta:ssd}. With the proposed algorithm, the mAP is increased by $2.0\%$ at the scale of $300\times300$ and by $1.9\%$ at the scale of $500\times500$. The consistently improvement demonstrates that the proposed algorithm can work with other detection frameworks as well as RetinaNet and thus is flexible for real-world applications.

\section{Conclusion}
\label{sec:conclusion}
In this work, we develop a semi-anchored detector for one-stage object detection. Specifically, we propose to classify locations directly without anchors, which can mitigate the class-imbalance issue, but to keep anchors for regression, where anchor is essential for helping predict different shapes of bounding boxes. To bridge the gap between classification and regression task, we propose a new location labeling strategy using anchors and add a novel anchor classification head to refine the classification results on anchors at positive locations. The empirical study on COCO verifies that the proposed method can boost the performance of one-stage detector dramatically. This work provides some evidences for the effect of anchors in object detection. Further exploring the architecture of detectors with/without anchors can be our future work.

\bibliography{anchor19}

\end{document}